\definecolor{linkblue}{rgb}{0.1,0.1,0.8}
\newtheorem{theorem}{Theorem}
\newtheorem{lemma}[theorem]{Lemma}
\newtheorem{definition}[theorem]{Definition}
\newcommand{\R}{\mathbb{R}}
\newcommand{\Z}{\mathbb{Z}}
\renewcommand{\epsilon}{\varepsilon}
\DeclareMathOperator{\E}{E}
\DeclareMathOperator{\mut}{mut}
\DeclareMathOperator{\cross}{cross}
\newcommand{\Bin}{\mathcal{B}}
\newcommand{\assign}{\leftarrow}
\newcommand{\oea}{$(1 + 1)$~EA\xspace}
\newcommand{\ga}{$(1 + (\lambda,\lambda))$~GA\xspace}
\newcommand{\onemax}{\textsc{OneMax}\xspace}
\newcommand{\OM}{\textsc{Om}\xspace}
\newcommand{\leadingones}{\textsc{LeadingOnes}\xspace}
\newcommand{\const}{{C_{0}}}
\newcommand{\barlam}{\bar{\lambda}}
\begin{document}
    
\title{Optimal Parameter Choices Through Self-Adjustment: Applying the 1/5-th Rule in Discrete Settings}

\author{Benjamin Doerr$^{1}$,  
Carola Doerr$^{2,3}$}
 
\date{
$^1$\'Ecole Polytechnique, Paris--Saclay, Palaiseau, France\\
$^2$Sorbonne Universit\'es, UPMC Univ Paris 06,  UMR 7606, LIP6, Paris, France\\
$^3$CNRS, UMR 7606, LIP6, Paris, France\\[2ex]
\today
}

\sloppypar
\maketitle
 
\begin{abstract}
While evolutionary algorithms are known to be very successful for a broad range of applications, the algorithm designer is often left with many algorithmic choices, for example, the size of the population, the mutation rates, and the crossover rates of the algorithm. These parameters are known to have a crucial influence on the optimization time, and thus need to be chosen carefully, a task that often requires substantial efforts. Moreover, the optimal parameters can change during the optimization process. It is therefore of great interest to design mechanisms that dynamically choose best-possible parameters. An example for such an update mechanism is the one-fifth success rule for step-size adaption in evolutionary strategies. While in continuous domains this principle is well understood also from a mathematical point of view, no comparable theory is available for problems in discrete domains.

In this work we show that the one-fifth success rule can be effective also in discrete settings. We regard the $(1+(\lambda,\lambda))$~GA proposed in [Doerr/Doerr/Ebel: From black-box complexity to designing new genetic algorithms, TCS 2015]. We prove that if its population size is chosen according to the one-fifth success rule then the expected optimization time on \textsc{OneMax} is linear. This is better than what \emph{any} static population size $\lambda$ can achieve and is asymptotically optimal also among all adaptive parameter choices.  
\end{abstract}
%

\sloppy{
\section{Introduction}

It is widely acknowledged that setting the parameters of evolutionary algorithms (EA) is one of the key difficulties in evolutionary optimization. Eiben, Hinterding, and Michalewicz~\cite{EibenHM99} call this challenge ``one of the most important and promising areas of research in evolutionary computation''. This statement retains its topicality 15 years after the original publication of~\cite{EibenHM99} as many talks at evolutionary computation conferences certify. We also understand today that even small changes in the parameters can yield to exponential performance gaps of the regarded algorithms~\cite{DrosteJW02,DoerrJSWZ13}.

Substantial research efforts have been undertaken to find good parameter settings for general EAs, see for example~\cite{DeJong75}. Around the same time it has been discovered that it may be sub-optimal to use a fixed set of parameters throughout the whole optimization process. It was suggested instead to change the parameters of the algorithms by some dynamic update rules, often using some sort of feedback of the fitness landscape that the algorithm is facing. For example, it can be beneficial in earlier parts of the process to invest in \emph{exploration} of the fitness landscape, while the algorithm should become more stable and focus on one or few areas of attraction in the later \emph{exploitation} phase(s). 

Interestingly, while in continuous domains parameter control\footnote{We use here and in the following a slightly adapted version of the terminology for parameter setting suggested in~\cite{EibenHM99}. This terminology is summarized in Section~\ref{sec:classification} and Figure~\ref{fig:classification}.} is analyzed also theoretically~\cite{AugerH13, Jagerskupper05, HansenGO95}, adaptive parameter choices play only a marginal role in theoretical investigations of EAs for discrete search spaces, 
the few exceptions showing an advantage of adaptive parameter settings being the fitness-dependent mutation rates for the (1+1) evolutionary algorithm on \leadingones analyzed in~\cite{BottcherDN10} and the fitness-dependent choice of the population size in~\cite{DoerrDE15} for the \ga on \onemax, respectively. Both these adaption schemes, however, require a very solid understanding of the problem at hand and are thus likely to be of limited practical relevance. Another example from the discrete EA literature are reductions of the parallel runtime (but not the total optimization time) for several test functions when doubling the number of parallel instances in a parallel EA after each unsuccessful iteration~\cite{LassigS11}. 

With this work we provide a first example for a discrete optimization problem where a self-adjusting (i.e., adaptive, but not fitness-dependent) parameter choice yields an expected optimization time that is better by more than a constant factor than any static parameter choice. The parameter update rule is extremely simple and does not require any problem-specific insights. More precisely, we analyze the runtime of the already mentioned \ga with population sizes chosen according to the one-fifth success rule on the generalized \onemax problem. We also show that the one-fifth success update scheme is optimal in this setting, that is, no alternative update mechanism can yield a significantly smaller runtime. In fact, we show that, throughout the whole optimization process, the one-fifth success rule suggests parameter settings that closely follow the theoretically best possible choices.

\subsection{The One-Fifth Rule}
One of the earliest adaptive update rules suggested in the evolutionary computation literature is the \emph{one-fifth success rule}. It was independently discovered in~\cite{Rechenberg, Devroye72, SchumerS68} and constitutes today one of the best known and most widely applied techniques in parameter control. Several empirical results (cf.~\cite{EibenS03} and references therein) suggest that EAs using the one-fifth rule for adaptive parameter control are quite capable of finding optimal or close to optimal parameter settings. Since the parameters are updated without the intervention of the user, such update mechanisms are a very convenient way to minimize parameter tuning efforts. Furthermore, the one-fifth success rule does not require any problem-specific knowledge and is thus widely applicable.

Originally, the one-fifth rule was designed to control the step size of evolution strategies. In intuitive terms, it suggests that if the probability to create an offspring of better than current-best fitness is greater than $1/5$, then the step size should be increased, while it should be decreased if the probability is lower than $1/5$. Today, this rule has found applications much beyond the adaptation of the step size. Here in this work we use it for adjusting the offspring population size of a genetic algorithm.

Without going into details, we note that many other adaptive update rules have been experimented with in the evolutionary computation literature, cf.\! \cite{EibenHM99}, \cite[Chapter 8]{EibenS03}, and~\cite{KarafotiasHE15} for excellent surveys. 

\subsection{The \texorpdfstring{\ga}{(1+(lambda,lambda))~GA}}

We regard the \ga, which has been proposed in~\cite{DoerrDE15} as a first example of an evolutionary algorithm optimizing the generalized \onemax problem using $o(n \log n)$ function evaluations. For the theory of evolutionary algorithms, this is a big success as it shows for the first time that even for such simple problems the usage of crossover can be beneficial (all previously known evolutionary algorithms need $\Omega(n \log n)$ function evaluations in expectation to optimize the generalized \onemax problem).

An important parameter of the \ga is $\lambda$, the number of offspring generated in the mutation phase and the subsequent crossover phase of the algorithm. In the original paper~\cite{DoerrDE15} it is shown that for $\lambda=O(\sqrt{\log n})$ the expected optimization time of the \ga on \onemax is $O(n \sqrt{\log n})$. In~\cite{DoerrD15exact} we improve the runtime bound to $\Theta(\max\{n \log(n) / \lambda, n \lambda \log\log(\lambda)/\log(\lambda)\})$. This expression is minimized for $\lambda = \Theta(\sqrt{\log(n) \log\log(n)/\log\log\log(n)})$, giving an optimization time of $\Theta(n \sqrt{\log(n) \log\log\log(n) / \log\log(n)})$. While this exact expression is irrelevant for the purposes of the present paper, it is important to note that this tight bound is super-linear for every possible choice of $\lambda$.

It was also observed in~\cite{DoerrDE15} that the \ga has only linear expected optimization time on \onemax when the population size $\lambda$ is chosen adaptively depending on the fitness of the current-best individual. While theoretically appealing, this result has limited practical implications since the proposed fitness-dependent parameter choice crucially requires a very good understanding of the optimization process and thus, of the problem at hand. This is testified by the optimal relation of the population size to the current fitness-distance $d$ to the optimum, which is $\Theta(\sqrt{n/d})$. Guessing such a functional relationship for real-world optimization problems is typically not doable with reasonable efforts.  

Interestingly, an alternative approach based on the one-fifth success rule was suggested in~\cite{DoerrDE15}. In a series of experimental evaluations it was shown that if the population size $\lambda$ is chosen according to this rule, the performance of the resulting \ga is among the best ones for a series of test problems. In that algorithm the population size is increased if no improvement has happened in the last iteration, while it is decreased otherwise. More precisely, for a suitable constant $F > 1$, the population size parameter $\lambda$ is multiplied by $F^{1/4}$ after each iteration in which the fitness of the current-best individual could not be improved, and it is divided by $F$ otherwise, i.e., if the fitness of the current-best search point increased in that iteration. 

\subsection{Our Results}

While the observations made in~\cite{DoerrDE15} are purely empirical, we provide with this work a theoretical analysis of the suggested self-adjusting \ga. We prove that the suggested implementation of the one-fifth success rule yields a linear expected optimization time of the \ga on the generalized \onemax problem. As noted above this is better than what any static parameter choice can achieve and is also best possible among all comparison-based algorithms as we shall comment in Section~\ref{sec:lower}. In particular, our bound shows that the one-fifth success rule suggests population sizes that are asymptotically optimal among all possible (static and dynamic) choices.

To the best of our knowledge, this is the first time that in a discrete search environment a self-adjusting parameter choice is shown to be superior to any static choice. Indeed the \ga is the first proven example in discrete evolutionary algorithmics where a non-fitness-dependent parameter choice reduces the optimization time by more than a constant factor. The results that come closest to this are the mentioned results from~\cite{BottcherDN10,LassigS11}, which are either constant factor reductions of the expected runtime (in case of~\cite{BottcherDN10}) or  reductions of the parallel expected runtime but not the total number of function evaluations (in case of~\cite{LassigS11}).

Our proof gives some general insights in the working principles of adaptive parameter choices in discrete domains, which hopefully lead to future applications of this approach in discrete search. 

Our paper is organized as follows. We first introduce the \ga with static population sizes, give background on the generalized \onemax problem, and recall known bounds for the expected optimization time of the \ga on \onemax functions in Section~\ref{sec:algorithm}. In Section~\ref{sec:selfadaptive} we present the \ga with self-adjusting parameter choices along with a brief summary of the mentioned (slightly adapted) classification scheme of Eiben, Hinterding, and Michalewicz~\cite{EibenHM99} for parameter settings. We also present in Section~\ref{sec:selfadaptive} the runtime analysis of the self-adjusting \ga (see Section~\ref{sec:runtime}), followed by a discussion of the general insights obtained through that analysis (Section~\ref{sec:insights}). Finally, we show in Section~\ref{sec:lower} that the one-fifth success rule suggests optimal or close to optimal parameter settings. 

\section{The \texorpdfstring{$(1+(\lambda,\lambda))$}{(1+(lambda,lambda))}~GA with Static Population Size}
\label{sec:algorithm} 

Adopting the conventions and notation from~\cite{DoerrDE15}, we regard here in this work only search spaces with bit string representations, we write $x=x_1 \ldots x_n$, $[k]:=\{1,2, \ldots, k\}$, and $[0..k]:=[k]\cup\{0\}$ for any bit string $x \in \{0,1\}^n$ and any non-negative integer $k$.
By $\Bin(n,p)$ we denote the binomial distribution with $n$ trials and success probability $p$; i.e., $\Bin(n,p)(\ell)=\binom{n}{\ell} p^{\ell}(1-p)^{n-\ell}$ for any $\ell \in [0..n]$.

The \ga is given in Algorithm~\ref{alg:GA}. It uses the following two variation operators. 
\begin{itemize}
	\item The unary mutation operator $\mut_{\ell}(\cdot)$ which, given some $x \in \{0,1\}^n$, creates from $x$ a new bit string $y$ by flipping exactly $\ell$ bit entries in it.
	\item The binary crossover operator $\cross_c(\cdot,\cdot)$ with crossover probability $c$, which, given two bit strings $x$ and $x'$, chooses $y:=\cross_c(x,x')$ by choosing for each $i \in [n]$ $y_i:=x'_i$ with probability $c$ and choosing $y_i=x_i$ otherwise. 
\end{itemize}
Thus, after a random initialization of the algorithm, in each iteration the following steps are performed. 
\begin{itemize}
	\item In the \emph{mutation phase}, $\lambda$ offspring are sampled from the current-best solution $x$ by applying $\lambda$ times independently the mutation operator $\mut_\ell(\cdot)$ to $x$, where the step size $\ell$ is chosen at random from $\Bin(n,p)$ before the generation of the first offspring.
	\item In the \emph{crossover phase}, $\lambda$ offspring are created from~$x$ and (one of) the best of the $\lambda$~offspring from the mutation phase, $x'$, by sampling independently from $\cross_c(x,x')$.
	\item \emph{Elitist selection}: The best of the individuals of the crossover phase replaces $x$ if its fitness is at least as large as the fitness of $x$. If there are several offspring with best fitness, we disregard those that are equal to $x$ and choose one of the remaining ones uniformly at random.
\end{itemize}
Throughout this paper we shall use $p=\lambda/n$ and $c=1/\lambda$, choices which are well justified in~\cite[Sections~2 and~3]{DoerrDE15}. Only in Section~\ref{sec:lower} we regard arbitrary choices of the parameters $\lambda$, $p$, and $c$. 

\begin{algorithm2e}%
	\textbf{Initialization:} 
	Sample $x \in \{0,1\}^n$ uniformly at random and query $f(x)$\;
 \textbf{Optimization:}
\For{$t=1,2,3,\ldots$}{
\underline{\textbf{Mutation phase:}}\\
\Indp
Sample $\ell$ from $\Bin(n,p)$\label{line:L}\;
\For{$i=1, \ldots, \lambda$\label{line:mutstart}}{
Sample $x^{(i)} \assign \mut_{\ell}(x)$ and query $f(x^{(i)})$\;
}
Choose $x' \in \{x^{(1)}, \ldots, x^{(\lambda)}\}$ with $f(x')=\max\{f(x^{(1)}), \ldots, f(x^{(\lambda)})\}$ u.a.r.\label{line:mutend}\;
\Indm
\underline{\textbf{Crossover phase:}}\\
\Indp
\For{$i=1, \ldots, \lambda$\label{line:costart}}{
Sample $y^{(i)} \assign \cross_{c}(x,x')$ and query $f(y^{(i)})$\label{line:co}\; 
}
If exists, choose $y \in \{y^{(1)}, \ldots, y^{(\lambda)}\} \setminus \{x\}$ 
with 
$f(y)=\max\{f(y^{(1)}), \ldots, f(y^{(\lambda)})\}$ u.a.r.; otherwise, set $y:=x$\label{line:coend}\;
\Indm
\underline{\textbf{Selection step:}}
\lIf{$f(y)\geq f(x)$}{$x \assign y$\;
}
}
\caption{The \ga with offspring population size $\lambda$, mutation probability $p$, and crossover probability $c$.}
\label{alg:GA}
\end{algorithm2e}


As performance measure we regard the expected \emph{running time} of the \ga, that is, the expected number of function evaluations that the algorithm performs until it evaluates for the first time an optimal search point $x \in \arg\max f$. This is the common measure in runtime analysis and is sometimes referred to as the expected \emph{optimization time}. Note that for algorithms performing more than one fitness evaluation per iteration, such as the \ga, the expected runtime can be much different from the expected number of \emph{iterations} (generations).

\subsection{The Generalized OneMax Problem}
\label{sec:onemax}

In~\cite[Section 4]{DoerrDE15} the \ga is analyzed by experimental means. The results show that it performs well on \onemax functions, linear functions with random weights, and royal road functions. A theoretical investigation, an improved bound of which is stated below in Theorem~\ref{thm:previous2}, however, is currently available only for the generalized \onemax problem. As this problem is also the focus of our present work, we give a short introduction here.

The classical \onemax function counts the number of ones in a bit string. Optimizing it therefore corresponds to finding the all-ones bit string. Of course, we want the performance of an evolutionary algorithm to be independent of the problem encoding. More specifically, the algorithms that we typically regard have exactly the same optimization behavior on any \emph{generalized \onemax function}
\begin{align*}
\OM_z : \{0,1\}^n \to \R; x \mapsto |\{i \in \{1, \ldots, n\} \mid x_i = z_i\}|,
\end{align*}
which counts the number of positions in which the bit strings $x$ agrees with the \emph{target string} $z$. It is easy to see that $z$ is the unique (global and local) optimum of $\OM_z$. Note also that the classic \onemax function counting the number of ones in a bit string is the function $\OM_z$ with $z=(1,\ldots,1)$. It is therefore justified to call the set $\{\OM_z \mid z \in \{0,1\}^n \}$ the \emph{generalized \onemax problem}. For convenience we often drop the word ``generalized'' in the following. All statements made below hold for arbitrary \onemax functions.

The expected runtime of most search heuristics on \onemax is $\Omega(n \log n)$, due to a phenomenon called coupon collector's problem (see, e.g., \cite[Section 1.5]{Doerr11bookchapter} or \cite{MotwaniR95}). In intuitive terms, the argument is as follows. When the initial bit string is taken uniformly at random from $\{0,1\}^n$ then each bit has a probability of $1/2$ of being in the wrong initial configuration, i.e., it has to be touched with probability $1/2$. The coupon collector's problem states that if we touch one random bit at a time, then it takes $\Omega(n \log n)$ iterations until we have touched each bit at least once. Since many evolutionary algorithms (including, for example, \oea and Randomized Local Search) change on average one bit per iteration, this implies the $\Omega(n \log n)$ bound. 
It was a long-standing open question whether genetic algorithms can perform better on \onemax than this lower bound. Sudholt~\cite{Sudholt12} gave a first example of a crossover-based genetic algorithm outperforming the \oea on \onemax. But while his (2+1)~GA (for a suitably chosen mutation rate) is better by a constant factor than the \oea, it does not improve upon RLS. The \ga thus gave a first positive answer to this question, as we recall in the next section.

\subsection{Runtimes for Static and Fitness-Dependent Population Sizes}
\label{sec:known}
The following statement, proven in~\cite{DoerrD15exact}, provides a tight bound for the expected runtime of the \ga on \onemax.
\begin{theorem}[\cite{DoerrD15exact}]
\label{thm:previous2}
  The expected optimization time of the \ga with 
  $p = \lambda / n$ and 
  $c = 1 / \lambda$ on every generalized \onemax function is 
  \begin{align*}
  \Theta\left(\max\left\{\frac{n \log(n)}{\lambda}, \frac{n \lambda \log\log(\lambda)}{\log(\lambda)}\right\}\right).
  \end{align*}
Consequently, 
$\lambda = \Theta(\sqrt{\log(n) \log\log(n) / \log\log\log(n)})$ is the optimal choice for the parameter $\lambda$ and this yields an expected optimization time of $\Theta(n \sqrt{\log(n) \log\log\log(n) / \log\log(n)})$.
\end{theorem}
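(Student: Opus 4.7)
The plan is to analyze the expected number of iterations via drift analysis on the fitness-distance $d := n - f(x)$ to the optimum, splitting the state space into two regimes according to whether $d$ is large (multiplicative-drift regime) or small (near-optimum regime), and then multiplying by the per-iteration cost $2\lambda$. For the per-iteration analysis I would first fix a state with $d(x)=d$ and condition on the mutation step size $\ell \sim \Bin(n,\lambda/n)$, which concentrates at $\lambda$. Given $\ell$, each of the $\lambda$ mutation offspring flips a uniform random $\ell$-subset of bits; the number $K$ of wrong bits among these is hypergeometric with mean $\ell d/n$. The best mutation offspring $x'$ maximizes the net improvement $2K-\ell$, and standard hypergeometric tail bounds control how far above its mean the maximum over $\lambda$ independent trials lies. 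The crossover phase then generates $\lambda$ offspring by taking each bit from $x'$ with probability $1/\lambda$; a crossover offspring strictly improves on $x$ iff it inherits strictly more corrected than worsened bits from $x'$. Summing the appropriate binomial terms and taking the best over the $\lambda$ crossover offspring yields a one-iteration improvement probability of order $\Theta(\min\{1,\lambda^2 d/n\})$ in the large-$d$ regime, together with a conditional expected gain bounded away from zero.

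Writing $h(d)$ for a lower bound on the expected one-iteration fitness gain from state $d$, variable drift analysis gives
\[
\E[T_g] \le \sum_{d=1}^{n/2} \frac{1}{h(d)}.
\]
The large-$d$ part of this sum (down to the threshold $d \approx n/\lambda^2$ where the progress probability saturates to one) behaves like a multiplicative-drift integral and contributes $\Theta(n\log n/\lambda^2)$ generations. The small-$d$ part is more delicate: here the mutation phase typically already finds an almost-entirely-beneficial step, but the crossover phase still needs $\Theta(\log\lambda/\log\log\lambda)$ attempts to recombine all $\Theta(\lambda)$ beneficial flips while rejecting the harmful ones, contributing $\Theta(n\log\log\lambda/\log\lambda)$ generations. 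Multiplication by $2\lambda$ per iteration yields the upper bound. The matching lower bound I would obtain via symmetric tail arguments that upper-bound the expected one-step gain, together with additive drift from above in the large-$d$ regime and a direct counting argument on the crossover phase in the small-$d$ regime.

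Finally, equating the two terms inside the $\max$ gives the self-referential equation $\lambda^2 = \Theta(\log n \cdot \log\log\lambda / \log\lambda)$; a standard bootstrap (first deduce $\log\lambda = \Theta(\log\log n)$, then refine) yields $\lambda = \Theta(\sqrt{\log n \log\log n / \log\log\log n})$ and the stated optimal runtime. The main obstacle will be the analysis of the small-$d$ regime: pinning down the per-iteration improvement probability to the precise order $\log\lambda / (\lambda \log\log\lambda)$ requires a careful Poisson-approximation tail estimate for the number of crossover offspring (out of $\lambda$) that correctly recombine $\Theta(\lambda)$ beneficial flips while rejecting the harmful ones, conditional on the output of the mutation phase. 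This is considerably more subtle than the corresponding analysis in~\cite{DoerrDE15}, which only established the weaker bound $\lambda = O(\sqrt{\log n})$; a clean decoupling of the mutation and crossover phases, so that both can be optimized simultaneously, will be essential.
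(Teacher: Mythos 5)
First, a point of reference: the paper does not prove Theorem~\ref{thm:previous2} at all --- it is imported from~\cite{DoerrD15exact} and used as a black box. So your proposal can only be judged on its own merits and against what the present paper reports about the cited proof (Lemmas~\ref{lem:phase1} and~\ref{lem:phase2}, and the remark in Section~\ref{sec:lowerfitness} that the per-iteration fitness gain is of order $\log\lambda/\log\log\lambda$). Your overall architecture --- drift on $d=n-f(x)$, two regimes, per-iteration cost $2\lambda$, best-of-$\lambda$ maxima in both phases --- is the right one, and the arithmetic of the two contributions is consistent with the theorem. However, you have attached the two mechanisms to the wrong regimes, and the description of the crossover bottleneck is incorrect; executed as written, the plan would not yield the tight bound.

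Concretely: the one-iteration improvement probability from distance $d$ is $\Theta(\min\{1,\lambda^2 d/n\})$ (this is exactly what Lemmas~\ref{lem:phase1} and~\ref{lem:phase2} give with $\ell\approx\lambda$). This is proportional to $d$ --- i.e., multiplicative drift --- only for $d$ \emph{below} the threshold $n/\lambda^2$, so the coupon-collector term of $\Theta(n\log(n)/\lambda^2)$ generations comes from the small-$d$ endgame, not from the large-$d$ part of the sum as you claim. Conversely, for $d=\Theta(n)$ the success probability is already constant and the delicate issue is the \emph{size} of the gain: the mutation winner $x'$ differs from $x$ in $\ell\approx\lambda$ positions, of which $\Theta(\lambda)$ are beneficial; each of the $\lambda$ crossover offspring inherits each such position independently with probability $c=1/\lambda$, and a Poisson-type max-of-$\lambda$ estimate shows the best of them picks up $\Theta(\log\lambda/\log\log\lambda)$ beneficial positions and no detrimental one. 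The per-iteration gain there is thus $\Theta(\log\lambda/\log\log\lambda)$; the crossover never ``recombines all $\Theta(\lambda)$ beneficial flips'' (if it did, the bulk phase would cost only $O(n)$ evaluations and the second term of the bound could not be tight). In the actual small-$d$ regime there are no $\Theta(\lambda)$ beneficial flips to recombine: almost every flipped bit is detrimental, and the rare event is correcting even a single wrong bit. As written, your plan assigns constant gain to the bulk phase, which gives only the loose estimate $\Theta(n\lambda)$ evaluations there and misses the $\log\lambda/\log\log\lambda$ saving that is the whole point of the second term. A minor further slip: balancing the two terms gives $\lambda^2=\Theta\left(\log(n)\cdot\log(\lambda)/\log\log(\lambda)\right)$, not the inverted ratio you wrote, although the value of $\lambda$ you then state is the correct one.
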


It is also known~\cite{DoerrDE15} that a fitness-dependent (and thus, inherently non-static) choice of the population size can decrease the runtime even further. 

\begin{theorem}[Theorem~8 in~\cite{DoerrDE15}]
\label{thm:fitnessdependent}
The expected runtime of the \ga with $p=\lambda/n$, $c=1/\lambda$, and fitness-dependent choice of $\lambda:=\lceil \sqrt{n/(n-f(x))} \rceil$ on every generalized \onemax function is linear in~$n$.
\end{theorem}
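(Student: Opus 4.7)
The plan is to prove the upper bound $O(n)$ via a fitness-level argument; the matching $\Omega(n)$ lower bound is standard since any comparison-based algorithm must evaluate $\Omega(n)$ points to identify the target among $\{0,1\}^n$.

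Fix an iteration beginning at a search point $x$ with fitness-distance $d := n - f(x) \geq 1$, so that $\lambda = \lceil \sqrt{n/d}\,\rceil$, $p = \lambda/n$, and $c = 1/\lambda$. The heart of the proof is the claim that there exists a constant $p_0 > 0$ such that, for every $d \in [1..n]$, one iteration of the \ga strictly decreases $d$ with probability at least $p_0$. Granted this claim, the expected number of iterations spent at distance $d$ is $O(1)$, each such iteration costs $2\lambda = O(\sqrt{n/d})$ function evaluations, and summing over fitness levels yields
$$ \sum_{d=1}^{n} O\!\left(\sqrt{n/d}\right) \;=\; O(\sqrt{n})\sum_{d=1}^n d^{-1/2} \;=\; O(n). $$

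To establish the improvement claim for $\lambda \geq C$ (some absolute constant), I would condition on the step size $\ell \sim \Bin(n,\lambda/n)$ and analyze the two phases. In the mutation phase, for a single offspring the number $a$ of wrong bits flipped among the $\ell$ flipped positions is hypergeometric and satisfies $\Pr[a=0] \leq (1-d/n)^\ell$; over the $\lambda$ independent mutation offspring, $\Pr[a_{\max} = 0] \leq (1-d/n)^{\lambda\ell}$. A Chernoff bound gives $\ell = \Theta(\lambda)$ with constant probability, so $\lambda\ell = \Omega(n/d)$, which makes $(1-d/n)^{\lambda\ell}$ bounded away from $1$, and hence $a_{\max} \geq 1$ with probability $\Omega(1)$. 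In the crossover phase, condition further on the winning mutation offspring $x'$, which has $a_{\max} \geq 1$ corrected bits and $\ell - a_{\max}$ worsened bits relative to $x$. A single crossover offspring $y$ improves on $x$ if, for example, it inherits exactly one corrected bit from $x'$ and no worsened bit; this has probability at least $(a_{\max}/\lambda)(1-1/\lambda)^{\ell-1} = \Omega(a_{\max}/\lambda)$ on the Chernoff event $\ell = O(\lambda)$. Over the $\lambda$ independent crossover trials, at least one improves on $x$ with probability $\Omega(1)$. Multiplying the three constant probabilities yields the claim. The remaining regime of constant $\lambda$ (equivalently $d = \Theta(n)$) is handled directly: the algorithm then behaves essentially like the \oea, and since $d/n = \Omega(1)$ a single-bit flip hits a wrong position with constant probability, giving constant improvement probability immediately.

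The main obstacle is navigating the dependencies between the two phases and the best-of-$\lambda$ selection of $x'$. The clean way I would handle this is to condition first on $\ell$, then on the joint outcome of the $\lambda$ mutation trials (which determines $x'$, $a_{\max}$, and $\ell - a_{\max}$), and only then exploit the conditional independence of the $\lambda$ crossover trials given $x'$. A secondary technicality is confirming that the factor $(1-1/\lambda)^{\ell-1}$ in the crossover estimate is $\Omega(1)$; this again relies on restricting to the Chernoff event $\ell \leq 2\lambda$, which itself has constant probability, so all the $\Omega(1)$ factors compose into one overall constant $p_0 > 0$.
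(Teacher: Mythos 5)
Your proposal is correct and follows essentially the same route as the source proof (the paper itself only cites this result from~\cite{DoerrDE15}, but its supporting machinery makes the intended argument clear): a constant per-iteration success probability obtained by conditioning on $\ell$ and combining the mutation- and crossover-phase bounds of Lemmas~\ref{lem:phase1} and~\ref{lem:phase2} (exactly as in the proof of Lemma~\ref{lem:boundp}), followed by the fitness-level summation $\sum_{d=1}^{n} O(\sqrt{n/d}) = O(n)$. The details you flag (hypergeometric tail for the mutation offspring, conditioning order across the two phases, and the $\Omega(1)$ bound on $(1-1/\lambda)^{\ell-1}$ on the event $\ell = O(\lambda)$) are handled the same way there, so no gap.
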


We will use the latter bound in our analysis of the self-adjusting \ga. More precisely, we show that the population sizes suggested by the one-fifth success rule typically do not deviate much from the fitness-dependent choice analyzed in Theorem~\ref{thm:fitnessdependent}. This observation has also been made experimentally in~\cite{DoerrDE15}. Figure~\ref{fig:plot}, taken from~\cite{DoerrDE15} (Figure~5 in that paper) shows the close relationship between the self-adjusting population sizes (in red) and the optimal fitness-dependent ones (in black) for a typical run of the \ga on \onemax. 

\begin{figure}
\begin{center}
\includegraphics[width=.7\linewidth]{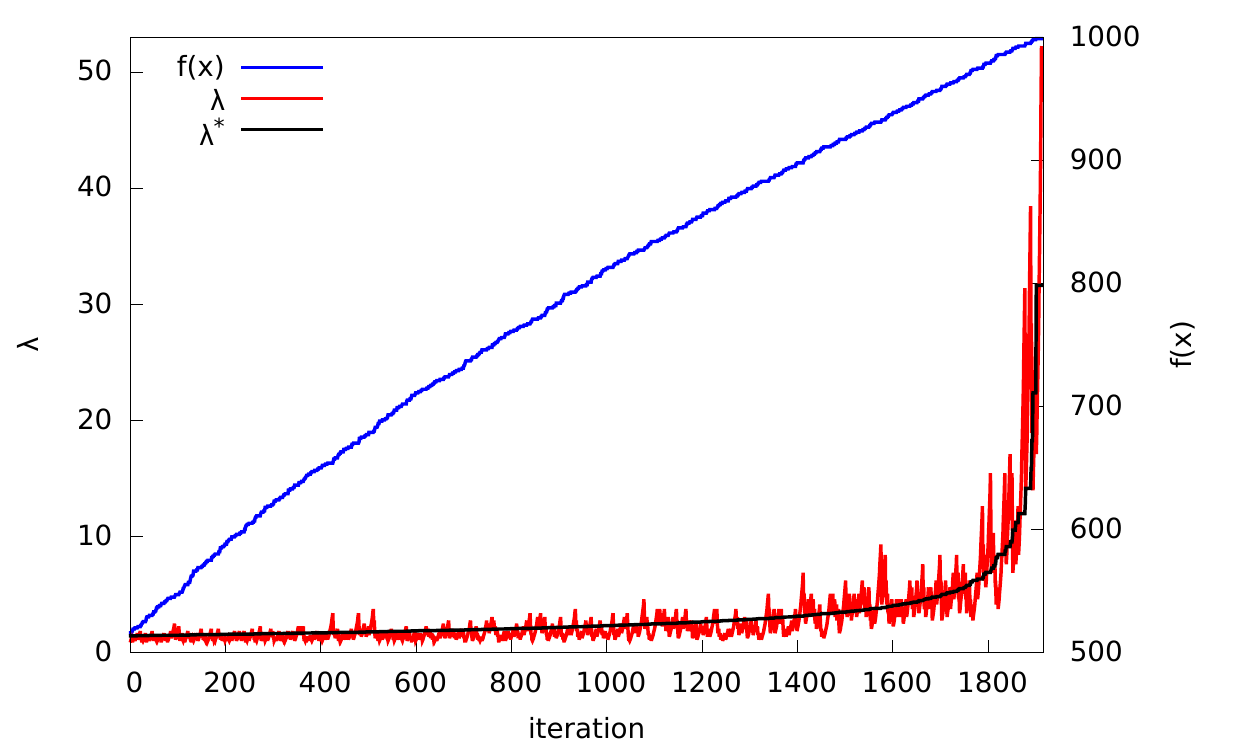}
\end{center}
\caption{{Evolution of $f(x)$ and $\lambda$ in one representative run of the \ga with self-adjusting~$\lambda$ and update strength $F=1.5$ on \onemax with $n=1000$. The bottom non-rugged curve plots the fitness-dependent choice $\lambda^*=\sqrt{n/(n-f(x))}$ analyzed in~\cite{DoerrDE15} (cf.\! Theorem~\ref{thm:fitnessdependent}).}}
\label{fig:plot}
\end{figure}

Two milestones in the analysis of the \ga in~\cite{DoerrDE15} are the success probabilities of the mutation and the selection phase, respectively. Since we shall make use of these two bounds, we briefly repeat them below. 

Note for Lemma~\ref{lem:phase1} that for an offspring $x'$ of $x$ with $\OM_z$-value greater than $\OM_z(x)-\ell$ there exists at least one position $i$ such that $x'_i=z_i$ while $x_i \neq z_i$. It is therefore possible to extract in the crossover phase this entry from $x'$ and thus increasing the overall fitness of the current best search point. This is why we call the  mutation phase successful if $\OM_z(x')>\OM_z(x)-\ell$ holds.

\begin{lemma}[Lemma~5 in~\cite{DoerrDE15}]
\label{lem:phase1}
 In the notation of Algorithm~\ref{alg:GA}, for all $\ell$ and $x$, the probability that in the mutation phase a search point $x'$ is created with $\OM_z(x')>\OM_z(x)-\ell$ is at least $1-\left(\frac{\OM_z(x)}{n}\right)^{\lambda \ell}$.
\end{lemma}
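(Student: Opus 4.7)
The plan is to reduce the claim to a simple combinatorial observation about a single mutation offspring and then exploit independence of the $\lambda$ offspring in the mutation phase.

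First, I would analyze a single offspring $x^{(i)} = \mut_\ell(x)$. Let $a$ denote the number of flipped positions $j$ where $x_j = z_j$ (a ``correct'' bit was destroyed), so that the remaining $\ell - a$ flipped positions satisfy $x_j \neq z_j$ (an incorrect bit was repaired). A direct count gives
\begin{equation*}
  \OM_z(x^{(i)}) = \OM_z(x) - a + (\ell - a) = \OM_z(x) + \ell - 2a.
\end{equation*}
Thus the event $\OM_z(x^{(i)}) > \OM_z(x) - \ell$ is equivalent to $a < \ell$, i.e., at least one of the $\ell$ flipped positions lies among the $n - \OM_z(x)$ positions where $x$ disagrees with $z$. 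Equivalently, the complementary event ``$\OM_z(x^{(i)}) \leq \OM_z(x) - \ell$'' holds precisely when the uniformly random $\ell$-element subset of $[n]$ selected by $\mut_\ell$ is entirely contained in the set of $\OM_z(x)$ correct positions.

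Second, I would bound that probability:
\begin{equation*}
  \Pr[\OM_z(x^{(i)}) \leq \OM_z(x) - \ell] = \frac{\binom{\OM_z(x)}{\ell}}{\binom{n}{\ell}} = \prod_{i=0}^{\ell-1} \frac{\OM_z(x) - i}{n - i} \leq \left(\frac{\OM_z(x)}{n}\right)^{\ell},
\end{equation*}
using that each factor $(\OM_z(x) - i)/(n - i)$ is at most $\OM_z(x)/n$ because $\OM_z(x) \leq n$.

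Third, since $\ell$ is fixed at the beginning of the mutation phase (line~\ref{line:L} of Algorithm~\ref{alg:GA}) and the $\lambda$ offspring $x^{(1)}, \ldots, x^{(\lambda)}$ are then generated by independent applications of $\mut_\ell$, the events ``$\OM_z(x^{(i)}) \leq \OM_z(x) - \ell$'' are mutually independent conditional on $\ell$ and $x$. Consequently,
\begin{equation*}
  \Pr[\,\forall i \colon \OM_z(x^{(i)}) \leq \OM_z(x) - \ell\,] \leq \left(\frac{\OM_z(x)}{n}\right)^{\lambda \ell},
\end{equation*}
and since $x'$ is selected as a best among $x^{(1)}, \ldots, x^{(\lambda)}$, we have $\OM_z(x') > \OM_z(x) - \ell$ whenever at least one offspring satisfies this inequality. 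Taking complements yields the claimed lower bound $1 - (\OM_z(x)/n)^{\lambda \ell}$.

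I expect no serious obstacle here: the only subtlety is matching the combinatorial characterization of the bad event ($a = \ell$) with the product-form probability bound, and being careful that the $\lambda$ mutation offspring are genuinely independent once $\ell$ has been sampled (which is ensured by the algorithm's structure). No property of the crossover phase or of $p$ and $c$ is needed.
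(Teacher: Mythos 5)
Your proof is correct and complete; the only point worth noting is that this paper does not prove Lemma~\ref{lem:phase1} itself but imports it from~\cite{DoerrDE15}. Your argument---characterizing the failure event for a single offspring as ``all $\ell$ flipped positions lie among the $\OM_z(x)$ bits already agreeing with $z$'', bounding its probability by $\binom{\OM_z(x)}{\ell}/\binom{n}{\ell} \leq \left(\OM_z(x)/n\right)^{\ell}$, and then using the conditional independence of the $\lambda$ offspring given $\ell$---is exactly the standard proof given in that reference.
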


\begin{lemma}[Lemma~6 in~\cite{DoerrDE15}]
\label{lem:phase2}
In the notation of  Algorithm~\ref{alg:GA}, consider fixed outcomes of $\ell$, $x$, and $x'$. Then the random outcome $y$ of the crossover phase satisfies
\begin{align*}
Pr[ \OM_z(y)> \OM_z(x) \mid & \OM_z(x')>\OM_z(x)-\ell]\\
& \quad \geq 
1-\left(1-c (1-c)^{\ell-1}\right)^{\lambda}.
\end{align*}
\end{lemma}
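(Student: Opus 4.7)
The plan is to identify, for each of the $\lambda$ independent crossover trials, a single very specific bit-pattern whose probability is straightforward to compute exactly and whose occurrence already forces $\OM_z(y^{(i)}) > \OM_z(x)$. Once such a per-trial lower bound on the improvement probability is in hand, independence of the $\lambda$ calls to $\cross_c(x,x')$ yields the claimed bound by the standard complement calculation.

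First I would unpack the conditioning event. Since $x'$ was produced from $x$ by flipping exactly $\ell$ coordinates, writing $k$ for the number of these coordinates on which the flip corrected a disagreement with $z$, one has $\OM_z(x') = \OM_z(x) + 2k - \ell$, so the hypothesis $\OM_z(x') > \OM_z(x) - \ell$ is equivalent to $k \ge 1$. Fix one such \emph{good} coordinate $i^*$ (where $x_{i^*} \neq z_{i^*}$ and $x'_{i^*} = z_{i^*}$), and consider a single offspring $y^{(i)} := \cross_c(x,x')$. If at $i^*$ the crossover chooses the bit from $x'$ while at each of the other $\ell-1$ coordinates where $x$ and $x'$ differ it chooses the bit from $x$, then $y^{(i)}$ coincides with $x$ except at $i^*$, where it equals $z_{i^*}$; hence $\OM_z(y^{(i)}) = \OM_z(x) + 1 > \OM_z(x)$. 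By independence of the crossover decisions across coordinates, this favorable pattern occurs with probability exactly $c(1-c)^{\ell-1}$, since bits at coordinates where $x_i = x'_i$ are irrelevant (they cannot change).

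Combining the $\lambda$ independent crossover trials then gives that the probability that \emph{no} trial produces the favorable pattern is at most $(1 - c(1-c)^{\ell-1})^\lambda$, and any trial that does produces an offspring strictly better than $x$, which by the selection rule of the crossover phase in Algorithm~\ref{alg:GA} will be selected as $y$. Taking the complement yields the stated bound. I do not expect a real obstacle here; the only care needed is in verifying that fixing a single good coordinate suffices even when $k > 1$ (the remaining $k-1$ good coordinates lie among the $\ell - 1$ ``other'' differing positions, where keeping the $x$-bit merely forgoes a potential further improvement without spoiling the guaranteed $+1$ at $i^*$), and in observing that the case $\ell = 0$ is vacuous because the conditioning event $\OM_z(x') > \OM_z(x)$ would then be inconsistent with $x' = x$.
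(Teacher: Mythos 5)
Your argument is correct: the reduction of the conditioning event to the existence of a good coordinate $i^*$, the exact probability $c(1-c)^{\ell-1}$ of the single favorable bit-pattern per crossover offspring, and the complement over the $\lambda$ independent trials together give exactly the stated bound. The paper itself states this lemma without proof (citing Lemma~6 of~\cite{DoerrDE15}), and your derivation is precisely the standard argument used there, so there is nothing to add beyond your own careful remarks about the case $k>1$ and the vacuous case $\ell=0$.
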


\section{The \texorpdfstring{$(1+(\lambda,\lambda))$}{(1+(lambda,lambda))}~GA with Self-Adjusting Population Sizes}
\label{sec:selfadaptive}

Being the first provable super-constant speed-up via a fitness-dependent parameter choice, the linear optimization time obtained in~\cite{DoerrDE15} is a big success in the theory of evolutionary algorithms. From the practical point of view, though, the question remains how in an actual application the user of the \ga would guess the fitness-dependent optimal choice of $\lambda$. In this section, we show that this is not needed. A self-adjusting choice inspired by the classic one-fifth rule can give the same (optimal, as the result in Section~\ref{sec:lower} shows) linear optimization time. To the best of our knowledge, this is the first result proving a reduced optimization time via parameter self-adjustment in discrete search spaces. We are optimistic that our approach can be applied to other discrete problems. At the end of this section, we give some general hints that might be useful for such purposes.

\subsection{Terminology for Parameter Settings}
\label{sec:classification}

\begin{figure}[t]
\begin{center}
\includegraphics[width=0.5\linewidth]{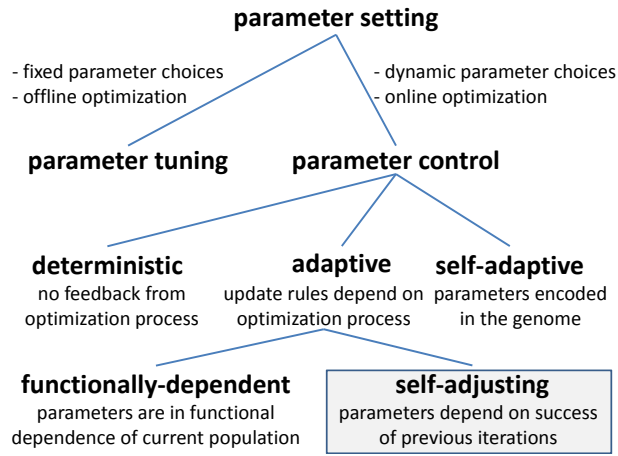}
\end{center}
\caption{An extended version of the classification scheme from~\cite{EibenHM99}. We regard in this work self-adjusting parameter choices.}
\label{fig:classification}
\end{figure}

Since the literature is unanimous with respect to the terminology for parameter settings, we have adopted and slightly extended in this work the taxonomy of Eiben, Hinterding, and Michalewicz~\cite{EibenHM99}. Figure~\ref{fig:classification}, an adapted version of Figure~1 in~\cite{EibenHM99}, illustrates this classification, which we briefly summarize below.

The efforts of choosing the right parameters in an evolutionary algorithm is called \emph{parameter setting}. The first difference is between \emph{static} and \emph{dynamic} parameter settings. In the former the parameters are set before the actual run of the algorithm and they are not changed during the optimization process. In typical applications, a \emph{parameter tuning} step precedes the application of the EA. In this phase, suitable parameter choices are sought through initial experimental investigations, either for all parameters simultaneously, or in an iterative process. 

Optimizing dynamic parameter choices is called \emph{parameter control} in~\cite{EibenHM99}. Three principals are discussed: deterministic, adaptive, and self-adaptive parameter control. A dynamic parameter choice is called \emph{deterministic} if it does not depend on the fitness landscape encountered by the algorithm. That is, there is no feedback between the fitness values and the dynamic parameters.\footnote{As noted in~\cite{EibenHM99} this does not exclude randomized update schemes. A possibly better wording would therefore be \emph{fixed} or \emph{feedback-free} update rules.
} \emph{Adaptive} parameter choices are those dynamic rules where the update rule depends on the optimization process. Within this class (and this is different from the classification scheme proposed in~\cite{EibenHM99}) we distinguish between \emph{functionally-dependent} parameter choices (where the parameters depend only on the current state of the algorithm, i.e., the current population) and \emph{self-adjusting} adaptive choices, where the parameters depend on the success of (all) previous iterations. It is easy to see that the one-fifth success rule considered in this paper classifies as a self-adjusting parameter setting, while the fitness-dependent parameter choice considered in Theorem~\ref{thm:fitnessdependent} is an example for a functionally-dependent parameter choice. Finally, \emph{self-adaptive} parameter choices are encoded themselves in the genome of the search points and are subject to variation operators. The hope is that the better parameter choices yield better offspring and thus survive the evolutionary process.

\subsection{The Algorithm}
\label{sec:algo}

Recall that the one-fifth success rule in evolution strategies is used to change the step-size in a self-adjusting manner. When the empirical success probability is large, the step-size is increased to hopefully speed-up the exploration. When is it low, it is reduced to hopefully increase the chance of a success. This is done in a way that an average success probability of one fifth leads to no change of the step-size on average.

In discrete search spaces, naturally, things are very different. However, we can still come up with a natural variant of the one-fifth success rule. Note that in our \ga (with the suggested choices $p = \lambda/n$ and $c=1/\lambda$ for the mutation rate and the crossover rate, respectively), increasing $\lambda$ will increase the success probability of one iteration, however, at the price of an increased number of function evaluations, that is, higher runtime. Consequently, it makes sense to increase $\lambda$ when the empirical success probability is low (to speed up the process of finding an improvement), but to reduce it when the success probability is large (to hopefully save computational effort). 

Taking Auger's~\cite{Auger09} implementation of the one-fifth success rule as example, we design the following self-adjusting version of the \ga, see also Algorithm~\ref{alg:GAself}. After an iteration that led to an increase of the fitness of $x$ (``success''), indicating an easy success, we reduce $\lambda$ by a constant factor $F > 1$ (of course, not letting $\lambda$ drop below $1$). If an iteration was not successful, we increase $\lambda$ by a factor of $F^{1/4}$ (since we analyze the algorithm for mutation probability $p=\lambda/n$ we do not let $\lambda$ exceed $n$). Consequently, after a series of iterations with an average success rate of $1/5$, we end up with the initial value of $\lambda$ (unless the lower barrier of $1$ was hit). 

\begin{algorithm2e}[t]%
	\textbf{Initialization:} 
	Sample $x \in \{0,1\}^n$ uniformly at random and query $f(x)$\;
	Initialize $\lambda \assign 1$\;
 \textbf{Optimization:}
\For{$t=1,2,3,\ldots$}{
\underline{\textbf{Mutation phase:}}\\
\Indp
Sample $\ell$ from $\Bin(n,p)$\;
\For{$i=1, \ldots, \lambda$}{
Sample $x^{(i)} \assign \mut_{\ell}(x)$ and query $f(x^{(i)})$\;
}
Choose $x' \in \{x^{(1)}, \ldots, x^{(\lambda)}\}$ with $f(x')=\max\{f(x^{(1)}), \ldots, f(x^{(\lambda)})\}$ u.a.r.\;
\Indm
\underline{\textbf{Crossover phase:}}\\
\Indp
\For{$i=1, \ldots, \lambda$}{
Sample $y^{(i)} \assign \cross_{c}(x,x')$ and query $f(y^{(i)})$\;
}
If exists, choose $y \in \{y^{(1)}, \ldots, y^{(\lambda)}\} \setminus \{x\}$ 
with 
$f(y)=\max\{f(y^{(1)}), \ldots, f(y^{(\lambda)})\}$ u.a.r.; otherwise, set $y:=x$\;
\Indm
\underline{\textbf{Selection and update step:}}\\
\Indp
\lIf{$f(y)>f(x)$}{
$x \assign y$; $\lambda \assign \max\{\lambda/F,1\}$\;}
\lIf{$f(y)=f(x)$}{
$x \assign y$; $\lambda \assign \min\{\lambda F^{1/4},n\}$\;}
\lIf{$f(y)<f(x)$}{$\lambda \assign \min\{\lambda F^{1/4},n\}$\;}
\Indm
}
\caption{The self-adjusting \ga with mutation probability $p$, crossover probability $c$, and update strength $F$.}
\label{alg:GAself}
\end{algorithm2e}

As a technical remark we note that where an integer is required in Algorithm~\ref{alg:GAself} (e.g., lines~6 and~10) we round $\lambda$ to its closest integer, i.e., instead of $\lambda$ we regard $\lfloor \lambda \rfloor= \lambda - \{\lambda\}$ if the fractional part $\{\lambda\}$ of $\lambda$ is less then $1/2$ and we regard $\lceil \lambda \rceil:=\lfloor \lambda\rfloor +1$ otherwise.

\subsection{Runtime Analysis}
\label{sec:runtime}

We show that the self-adjusting \ga (with standard parameters $p = \lambda/n$ and $c = 1/\lambda$) solves the generalized \onemax problem in linear time when the self-adjusting speed factor $F$ is not too large. 

The proof of this result is rather technical. For this reason, we are only able to show a linear optimization time when $F$ is smaller than a certain constant $F^*$, but we do not make this $F^*$ precise. In general, making implicit constants precise is a difficult task in runtime analysis, and for many much simpler problems the implicit constants are not known. In the experiments conducted in~\cite{DoerrDE13}, see in particular Figure~4 there, all values $F \in [1,2]$ worked well (recall that in Auger's implementation~\cite{Auger09} $F = 1.5$ was used). At the end of this section, we give some indication why $F$-values larger than $2.25$, however, may lead to an exponential expected optimization time.

Our main result is the following.
\begin{theorem}
\label{thm:selfadaptive}
The optimization time of the self-adjusting \ga with parameters $p=\lambda/n$ and $c=1/\lambda$ on every generalized \onemax function is $O(n)$ for any sufficiently small update strength $F>1$.
\end{theorem}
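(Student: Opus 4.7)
The plan is to exploit the fitness-dependent analysis of Theorem~\ref{thm:fitnessdependent}: define the \emph{target population size} $\lambda^*(d) := \sqrt{n/d}$ at fitness-distance $d := n - f(x)$, show that the one-fifth success rule keeps $\lambda$ within a constant factor of $\lambda^*(d)$ with high probability, and then run a drift argument on the potential $\Phi(x) := \sqrt{n\,(n-f(x))}$. The telescoping sum $\sum_{d=1}^{n}\sqrt{n/d} = \Theta(n)$ is exactly the work budget realized by Theorem~\ref{thm:fitnessdependent}, so any argument that forces the self-adjusting algorithm to spend only a constant factor more work per ``$d$-slot'' than that budget yields $O(n)$.

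\textbf{Step 1 (per-iteration success probability).} Using Lemmas~\ref{lem:phase1} and~\ref{lem:phase2} with $p=\lambda/n$ and $c=1/\lambda$, I quantify the probability $q(\lambda,d)$ that a single iteration strictly improves $f$. With a typical $\ell = \Theta(\lambda)$ the mutation phase succeeds with probability $1 - e^{-\Theta(\lambda^2 d / n)}$ and, conditional on that, the crossover phase succeeds with probability $\Theta(1)$. A short computation yields three regimes: $q(\lambda,d)$ is much smaller than $1/5$ when $\lambda \le \lambda^*(d)/K$, it exceeds $1-1/K$ when $\lambda \ge K\,\lambda^*(d)$, and it is $\Theta(1)$ for $\lambda$ in between, for a constant $K = K(F)$. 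Moreover, every successful iteration decreases $d$ by at least~$1$.

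\textbf{Step 2 (stabilization of $\lambda$).} Set $L := \log_F \lambda$ and $L^* := \log_F \lambda^*(d)$. Each iteration changes $L$ by $-1$ on success and by $+1/4$ on failure, so $E[\Delta L \mid \lambda, d] = \tfrac14 - \tfrac54\,q(\lambda,d)$. By Step~1 this drift is strictly positive when $L - L^* \le -\log_F K$ and strictly negative when $L - L^* \ge \log_F K$, and each step moves $L$ by at most $1$. A Hajek-style exponential-supermartingale bound then implies that $\Pr[|L-L^*| \ge \log_F K + j]$ decays geometrically in $j$. The target $L^*$ itself shifts, but only on a success and by at most $\log_F\sqrt{d/(d-1)} = O(1/(d\ln F))$, which is swamped by the $\Omega(1)$ restoring drift provided $F$ is a sufficiently small constant; the small-$d$ endgame is handled separately by noting that $O(\sqrt{n})$ residual iterations cost $O(n)$ evaluations on their own.

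\textbf{Step 3 (fitness drift and main obstacle).} With the tail bound from Step~2, I estimate the expected decrease of $\Phi$ per function evaluation. A successful iteration makes $\Phi$ drop by $\Theta(\lambda^*(d))$, while a full iteration costs $2\lambda$; conditioned on the typical event $\lambda = \Theta(\lambda^*(d))$, this gives per-evaluation drift bounded below by a positive constant. Integrating the exponential tails from Step~2 over the atypical regimes adds only an $O(1)$ multiplicative slowdown. An additive drift theorem (or direct telescoping over $d$) then gives $E[T] = O(\Phi_0) = O(n)$. The main obstacle is Step~2: the $L$-walk chases a moving target that jumps upward on every success, and upward excursions of $\lambda$ are particularly dangerous because each iteration at $\lambda$ costs $\Theta(\lambda)$ evaluations. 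Both effects can be controlled only when $\ln F$ is a sufficiently small constant, since for large $F$ even a short run of failures can send $\lambda$ to $\Theta(n)$, and a single ensuing expensive iteration costs $\Theta(n)$ evaluations, enough to destroy the linear bound (consistent with the authors' remark about $F > 2.25$).
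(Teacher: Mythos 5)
Your skeleton is essentially the paper's: the one quantitative fact you need in Step~2 --- that the per-iteration success probability exceeds $1/5$ once $\lambda \geq \const\lceil\sqrt{n/(n-f(x))}\rceil$, so that the walk on $\log_F\lambda$ (step $+1/4$ on failure, $-1$ on success) has negative drift --- is exactly Lemma~\ref{lem:boundp}, and your closing observation that the geometric tail of an upward excursion must beat its exponentially growing cost $\Theta(\lambda^* F^{t/4})$ (hence $F$ must be small) is exactly the convergence condition $F^{1/4}<\exp(c)$ in the paper's Claims~2.1 and~2.2. One claim in your Step~1 is false, however: $q(\lambda,d)$ does \emph{not} exceed $1-1/K$ for $\lambda\geq K\lambda^*(d)$. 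With $c=1/\lambda$ and typical $\ell\approx\lambda$, the crossover phase succeeds with probability only about $1-\left(1-\tfrac{1}{e\lambda}\right)^{\lambda}\to 1-e^{-1/e}\approx 0.31$, uniformly in $\lambda$; the paper makes this explicit in Section~\ref{sec:insights}. So the restoring drift on $\log_F\lambda$ is never better than the small constant $\tfrac14-\tfrac54\cdot 0.31\approx -0.14$ and does not strengthen as $\lambda$ grows. This does not break your argument (only $q>1/5$ is needed, which is what Lemma~\ref{lem:boundp} delivers), but the entire analysis lives in the narrow window between $1/5$ and $0.31$ --- this is why a one-third or one-half rule provably fails here --- and your three-regime picture obscures that.

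The genuine methodological difference is that the paper proves no stationary concentration of $L-L^*$ and runs no per-evaluation drift argument. It partitions the run into phases, each ending with a fitness improvement at whose end $\lambda\leq\const\lambda^*$ holds, bounds the expected cost of every phase by $O(\lambda^*)$, and then telescopes $\sum_{d}O(\sqrt{n/d})=O(n)$ as in Theorem~\ref{thm:fitnessdependent}. That decomposition sidesteps both soft spots of your write-up. First, the ``moving target'': by construction a phase does not end if a success still leaves $\lambda$ above the \emph{new} threshold $\const\lambda^*$, so throughout the main phase the invariant $\lambda\geq\const\lambda^*$ holds and the random walk is compared against a threshold that only moves in the helpful direction --- no separate small-$d$ endgame and no bound on the jump $\log_F\sqrt{d/(d-1)}$ is ever needed. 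Second, the occupation-measure step: an additive drift theorem applied per evaluation requires a drift bound that is uniform over states, which fails when $\lambda\ll\lambda^*$; ``integrating the exponential tails costs an $O(1)$ factor'' must be turned into an explicit accounting of expected evaluations per $\lambda$-level, which is precisely the paper's computation $\sum_t\E[T\mid I=t]\Pr[I=t]\leq D\lambda^*\sum_t F^{t/4}\exp(-ct)$. If you replace your Steps~2--3 by that phase accounting, your proof becomes the paper's; as written, those two steps carry gaps that are fixable but not yet filled.
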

 
To prove Theorem~\ref{thm:selfadaptive}, roughly speaking, we show that the population sizes $\lambda$ suggested by the one-fifth success rule are usually not very far from the fitness-dependent choice $\lambda^*=\lceil\sqrt{n/(n-f(x))}\rceil$ analyzed in~\cite[Theorem~8]{DoerrDE15} (which is restated above as Theorem~\ref{thm:fitnessdependent}). 
Intuitively, if $\lambda$ happens to be much larger than $\lambda^*$, the success probability of the \ga is so large that with reasonably large probability one of the next iterations is successful and, as a consequence, the value of $\lambda$ is then adjusted to its previous value divided by $F$, thus approaching again $\lambda^*$.
A key argument in this proof is the following lemma, which shows that for large values of $\lambda$ the success probability of the \ga is indeed reasonably large. This lemma can be seen as a generalization of Lemma~7 in~\cite{DoerrDE15}. 

\begin{lemma}
\label{lem:boundp}\label{LEM:BOUNDP}
Let $x \in \{0,1\}^n$.
Let $\lambda \geq \const \lceil\sqrt{n/(n-f(x))}\rceil$. 
Let $q=q(\lambda)$ be the probability that one iteration of Algorithm~\ref{alg:GA} (with parameters $p=\lambda/n$ and $c=1/\lambda$) starting in $x$ is successful. 
There exists a constant $C$ such that for all $\const>C$ we have $q>1/5$.
\end{lemma}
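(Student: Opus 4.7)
The plan is to bound $q$ from below by combining Lemmas~\ref{lem:phase1} and~\ref{lem:phase2}, restricting attention to values of the mutation strength $\ell$ that concentrate near their mean $\lambda$. Write $d := n-f(x)$. The hypothesis has two consequences: $\lambda\ge\const$ (so $\lambda$ can be made arbitrarily large by taking $\const$ large) and $\lambda^{2}d/n\ge\const^{2}$ (the quantity controlling the mutation factor). For any fixed $\ell$, Lemmas~\ref{lem:phase1} and~\ref{lem:phase2} jointly yield the conditional bound
\[
\Pr[\text{iteration successful}\mid\ell]\ge\bigl(1-(1-d/n)^{\lambda\ell}\bigr)\bigl(1-(1-c(1-c)^{\ell-1})^{\lambda}\bigr),
\]
so I only need to lower bound the expectation of the right-hand side over $\ell\sim\Bin(n,\lambda/n)$.

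The next step is to localise the expectation to the set $I:=[(1-\delta)\lambda,(1+\delta)\lambda]$ for a small constant $\delta>0$ to be fixed later. Since $\ell$ has mean $\lambda$ and variance at most $\lambda$, Chebyshev's inequality gives $\Pr[\ell\in I]\ge 1-1/(\delta^{2}\lambda)$, which tends to $1$ as $\const$ grows. On $I$ the mutation factor satisfies
\[
1-(1-d/n)^{\lambda\ell}\ge 1-\exp\bigl(-(1-\delta)\const^{2}\bigr),
\]
which is arbitrarily close to $1$ for $\const$ large. For the crossover factor I would use $(1-y)^{\lambda}\le e^{-y\lambda}$ with $y=(1/\lambda)(1-1/\lambda)^{\ell-1}$ together with the monotonicity of $(1-1/\lambda)^{\ell-1}$ in $\ell$ to obtain
\[
1-\bigl(1-c(1-c)^{\ell-1}\bigr)^{\lambda}\ge 1-\exp\bigl(-(1-1/\lambda)^{(1+\delta)\lambda}\bigr).
\]
Because $(1-1/\lambda)^{(1+\delta)\lambda}=[(1-1/\lambda)^{\lambda}]^{1+\delta}$ increases monotonically to $e^{-(1+\delta)}$ from below as $\lambda\to\infty$, for $\lambda$ above a constant depending only on $\delta$ the crossover factor exceeds $1-\exp\bigl(-(1-\delta)e^{-(1+\delta)}\bigr)$.

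Multiplying the three factors and passing to the limit $\delta\to 0$, $\const\to\infty$, the resulting lower bound on $q$ tends to $1-e^{-1/e}\approx 0.308$, which is strictly greater than $1/5$. To finish, I would first fix $\delta>0$ small enough that $1-\exp\bigl(-(1-\delta)e^{-(1+\delta)}\bigr)>1/5+\alpha$ for some $\alpha>0$, and then take $\const$ large enough that the Chebyshev tail and the deviation of the mutation factor from $1$ together cost less than $\alpha$. The hard part will be purely quantitative: the asymptotic slack above $1/5$ is only of order $0.1$, so the three error terms (tail of $\ell$, finite-$\lambda$ correction in $(1-1/\lambda)^{(1+\delta)\lambda}$, and mutation factor) must be balanced carefully and simultaneously. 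Conceptually the argument is a direct strengthening of Lemma~7 of~\cite{DoerrDE15} (which underlies Theorem~\ref{thm:fitnessdependent}): taking $\lambda$ larger than the fitness-dependent value $\lceil\sqrt{n/d}\,\rceil$ only improves the mutation factor and essentially preserves the crossover factor.
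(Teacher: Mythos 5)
Your proposal is correct and follows essentially the same route as the paper's proof: the same decomposition via the law of total probability over the window $[(1-\delta)\lambda,(1+\delta)\lambda]$, the same combination of Lemmas~\ref{lem:phase1} and~\ref{lem:phase2}, and the same three-factor lower bound (tail of $\ell$, mutation factor driven by $\lambda^2 d/n\ge\const^2$, crossover factor). The only cosmetic differences are that you use Chebyshev instead of Chernoff for the concentration of $\ell$ and track the sharper limiting constant $1-e^{-1/e}\approx 0.31$ where the paper settles for the cruder but sufficient bound $1-\exp(-(1/4)^{1+\varepsilon})\ge 0.21$.
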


\begin{proof}[of Lemma~\ref{lem:boundp}]
We use the same notation as in the description of Algorithm~\ref{alg:GAself}. For readability purposes we again write $\lambda$ even if an integer is required. 
For any fixed $\varepsilon>0$ and for any $\lambda$, the success probability $q$ of increasing the fitness by at least one is (by the law of total probability) at least
\begin{align}
\label{eq:lowerboundp}
& \Pr[L \in [(1-\varepsilon)\lambda, (1+\varepsilon)\lambda]]\cdot\\
& \min\{\Pr[f(y)>f(x) \mid L=\ell] \mid \ell \in [(1-\varepsilon)\lambda, (1+\varepsilon)\lambda]\}.\nonumber
\end{align}
By Lemmas~\ref{lem:phase1} and~\ref{lem:phase2} it holds for any $\ell$ that
\begin{align*}
& \Pr[f(y)>f(x) \mid L=\ell] 
\geq \\
& \nonumber 
\left( 1- \left(\tfrac{f(x)}{n}\right)^{\lambda \ell}\right)
\left(1-\left(1-\tfrac{1}{\lambda} (1-\tfrac{1}{\lambda})^{\ell}\right)^{\lambda}\right).
\end{align*}

It thus suffices to bound from below
\begin{itemize}
	\item[(i)]  $\min\{\left(1-\left(1-\tfrac{1}{\lambda} (1-\tfrac{1}{\lambda})^{\ell}\right)^{\lambda}\right) \mid \ell \in [(1-\varepsilon)\lambda, (1+\varepsilon)\lambda] \}$,
	\item[(ii)]  $\min\{1- \left(\tfrac{f(x)}{n}\right)^{\lambda \ell} \mid \ell \in [(1-\varepsilon)\lambda, (1+\varepsilon)\lambda]\}$,
	\item[(iii)]  $\Pr[L \in [(1-\varepsilon)\lambda, (1+\varepsilon)\lambda]]$.
\end{itemize}

\textbf{Bounding (i): } 
For any $\ell \in [(1-\varepsilon)\lambda, (1+\varepsilon)\lambda]$ it holds that
\begin{align*}
\left(1-\tfrac{1}{\lambda} (1-\tfrac{1}{\lambda})^{\ell}\right)^{\lambda}
& \leq
\left(1-\tfrac{1}{\lambda} (1-\tfrac{1}{\lambda})^{(1+\varepsilon)\lambda}\right)^{\lambda}\\
& \leq 
\left(1-\tfrac{1}{\lambda} (1/4)^{1+\varepsilon}\right)^{\lambda}\\
& \leq 
\exp(-(1/4)^{1+\varepsilon})\,.
\end{align*}
For $\varepsilon < 1/25$ we can thus bound expression (i) from below by 0.21.

\textbf{Bounding (ii): } 
%
We set $d:=n-f(x)$ and obtain, for $\ell \in [(1-\varepsilon)\lambda, (1+\varepsilon)\lambda]$,
\begin{align*}
\left(\tfrac{f(x)}{n}\right)^{\lambda \ell}
& \leq  
\left(\tfrac{f(x)}{n}\right)^{(1-\varepsilon)\lambda^2}
\leq 
\left(1-\tfrac{d}{n}\right)^{(1-\varepsilon)\const^2 n/d}\\
& \leq
(1/e)^{(1-\varepsilon) \const^2}\,.
\end{align*}
This expression is at most $1/100$ for large enough $\const$, showing that we can bound (ii) from below by $0.99$.

\textbf{Bounding (iii): } We apply Chernoff's bound to bound (iii) from below by 
$1-2\exp(-\varepsilon^2\lambda/3)$. Since $\lambda \geq 2 \const$, this term is again larger than $0.99$ for a suitably chosen $\const$. 

Putting everything together we have seen that, for a suitable choice of $\const$, the expression in (\ref{eq:lowerboundp}) is strictly larger than $0.99^2 \cdot 0.21 >1/5$.
\end{proof}

While the proof of Lemma~\ref{lem:boundp} was rather straightforward, the proof of the main theorem, i.e., Theorem~\ref{thm:selfadaptive}, is much more involved. 

\begin{proof}[of Theorem~\ref{thm:selfadaptive}]
As in the overview given before Lemma~\ref{lem:boundp} we sloppily denote in the following by $\lambda^*$ our fitness-dependent parameter choice of Theorem~\ref{thm:fitnessdependent}; i.e., $\lambda^*:=\lceil\sqrt{n/(n-f(x))}\rceil$. Note that the value of $\lambda^*$ depends on the current fitness value but that this is not reflected in the abbreviation. To increase the readability, we omit again to specify whether $\lambda$ has to be rounded up or down.

We partition the optimization process into phases. 
The first phase starts with the first fitness evaluation. 
A phase ends with an iteration at whose end we have increased the fitness of $x$ 
and $\lambda \leq \const \lambda^*$ holds, for a sufficiently large constant $\const$ that we do not compute explicitly. ($\const$ is determined by Lemma~\ref{lem:boundp}.) 

We shall first show that each phase has an expected cost of $O(\lambda^*)$ fitness evaluations. From this is it not difficult to conclude the proof by arguments used in the proof of Theorem~\ref{thm:fitnessdependent}. 

To bound the expected cost of each phase, we distinguish between \emph{``short''} phases, in which $\lambda < \const \lambda^*$ holds throughout, and \emph{``long''} phases, in which $\lambda \geq \const \lambda^*$ for at least one iteration. We abbreviate the threshold $\const \lambda^*$ by $\barlam$. Note that $\barlam$ as well depends on the current fitness $f(x)$.

\textbf{Claim 1: } The expected cost of a short phase is $O(\lambda^*)$.

\emph{Proof of claim 1: } 
Let $\tilde{\lambda}$ be the value of $\lambda$ at the beginning of the phase and 
let $t$ be the number of iterations of the phase. 
Since $\lambda$ does not exceed the threshold $\barlam$, there is exactly one iteration in which the fitness of $x$ is increased. That is, the value of $\lambda$ has first been multiplied $t-1$ times by $F^{1/4}$ until there was a fitness increase and the $\lambda$-value has been shrunk as a consequence of the fitness increase. 
The value of $\lambda$ at the end of the phase is thus $\tilde{\lambda} F^{(t-1)/4 - 1}$ and this value is bounded from above by $\barlam$ (since we are considering a short phase). 
Since an iteration with parameter $\lambda$ requires $2\lambda$ fitness evaluations, the total number of fitness evaluations is thus
\begin{align}
\label{eq:nochmal}
\sum_{i=0}^{t-1}{2\tilde{\lambda} F^{i/4}} 
= 2 \tilde{\lambda} \frac{F^{t/4}-1}{F^{1/4}-1}
= O(\barlam)
= O(\lambda^*).
\end{align}

\textbf{Claim 2: } The expected cost of a long phase is $O(\lambda^*)$.

\emph{Proof of claim 2: } 
We split the long phase into an opening phase and a main phase. The \emph{opening phase} ends with the last iteration in which $\lambda < \barlam$ holds, so that the \emph{main phase} starts with a $\lambda$ that is at least as large as $\barlam$, but less than $\barlam F^{1/4}$. 

Let $T$ denote the number of fitness evaluations during the phase and let $I$ denote the number of iterations in the main~(!) phase.
As in the proof of Claim 1 it will be easy to see that 
$\E[T \mid I=t] = D \lambda^* F^{t/4}$ for all $t$ and some fix constant $D$.
The most technical part of this proof is to bound the probability that the main phase of a long phase requires $t$ iterations; i.e., $\Pr[I=t]$ given that we are in a long phase.
We show that this probability is at most $\exp(-ct)$ for some positive constant $c$. It is well known that the geometric series $(\sum_{\ell=1}^{t}{F^{t/4}\exp(-ct)})_{\ell \in \Z_{>0}}$ converges if $F^{1/4} < \exp(c)$. 
The overall expected number of fitness evaluations during a long phase is thus
\begin{align}
\nonumber
\sum_{t=1}^{\infty}{\E[T \mid I=t]\Pr[I=t]}
& \leq 
D \lambda^*  \sum_{t=1}^{\infty}{F^{t/4}\exp(-ct)},
\end{align}
which is $O(\lambda^*)$ as desired.

It remains to prove the following two claims.

\textbf{Claim 2.1: } $\E[T \mid I=t] \leq D \lambda^* F^{t/4}$ for some large enough constant $D$.

\textbf{Claim 2.2: } Given that we are in a long phase, we have $\Pr[I=t] = \exp(-ct)$ for a positive constant~$c$.

\emph{Proof of Claim 2.1: }
The cost of the opening phase is at most 
$2 \tilde{\lambda} \sum_{i=0}^{m}{F^{i/4}}$, where $\tilde{\lambda}$ is the initial value of $\lambda$ at the beginning of the phase and $m$ is chosen maximally such that $\lambda F^{m/4} < \barlam$. 
As in (\ref{eq:nochmal}) one shows that this sum is $O(\lambda^*)$.
Similarly, since the initial $\lambda$ of the main phase is at most $\barlam F^{1/4}$, the cost of the main phase is at most
\begin{align*}
\barlam \sum_{i=1}^{t}{F^{i/4}} 
& = \barlam (F^{(t+1)/4}-1)/(F^{1/4}-1) \\
& \leq D' \barlam F^{(t+1)/4}
\end{align*}
for $D' \geq 1/(F^{1/4}-1)$. 

\emph{Proof of Claim 2.2: }
As mentioned above, the main phase starts with a $\tilde{\lambda}$ that is at least $\barlam$ and strictly less than $\barlam F^{1/4}$. We are interested in the first point in time at which $\lambda$ is less than $\barlam$. 
Note that all future values encountered in this phase are of the type $\tilde{\lambda}F^r$  for $r$ being a multiple of $1/4$. By regarding this exponent $r$, we transform the process into a biased random walk on the line $(1/4)\Z$. Our starting position is $0$. If an iteration is successful, i.e., if the fitness value of $x$ has increased during the iteration, the process does one step of length one to the left. It does a step of length $1/4$ to the right otherwise (we thus pessimistically ignore the fact that $\lambda$ never exceeds $n$). We bound the probability that it takes $t$ or more iterations until this random walk has reached a value of less then $0$. 
At this point in time the current $\lambda$ is less than the original $\barlam$ value that was active at the beginning of the main phase. That is, when the random walk reaches a position smaller than $0$, $\lambda$ is for certain less then the then active threshold $\barlam$ (which, by definition, increases whenever the fitness value of $x$ does). 

If an iteration is successful with probability at least $q$, the expected progress of this random walk in one iteration is $(1-q)/4 - q$, which is negative since by Lemma~\ref{lem:boundp} we have $q > 1/5$. 
Hence there exists a constant $c>0$ such that $(1-q)/4 - q < -c$.

To conclude the proof of Claim 2.2, let us define random variables $X_i$, $1 \leq i \leq t$, by setting $X_i=1/4$ if the fitness does not increase in the $i$th iteration of the main phase, and setting $X_i=-1$ otherwise. 
We have just seen that $\E[X_i]\leq -c$. 
Given that we are in a long phase, the probability that the main phase has length at least $t$ equals 
$\Pr[\forall j < t: \sum_{i=1}^{j}{X_i}>0]$. This is at most $\Pr[\sum_{i=1}^{t-1}{X_i}>0]$, which is in turn bounded from above by
$\Pr[\sum_{i=1}^{t-1}{X_i}> \E[\sum_{i=1}^{t-1}{X_i}]+(t-1)c]$. We apply Chernoff's bound---confer Theorem 1.11 in~\cite{Doerr11bookchapter} for a version that allows for random variables that do not necessarily take positive values---to see that, as desired, this term is at most
\begin{align*}
\exp\left(-\frac{2(t-1)^2c^2}{(t-1)(5/4)^2}\right)
= 
\exp\left(-32 (t-1) c^2/25\right).
\end{align*}
\end{proof}

\subsection{General Insights from the Runtime Analysis} 
\label{sec:insights}
The analysis above reveals the following facts, which might be helpful in general when trying to use a one-fifth success rule or a related self-adjusting rule in discrete search spaces. 

\textbf{\emph{The adjustment rule must fit to the limiting success probability.}} In the proof above, it was crucial that the success probability shown in Lemma~\ref{lem:boundp} was a constant larger than $1/5$. It is easy to see that if the success probability was uniformly bounded from above by a constant $\sigma < 1/5$, then $\log_F(\lambda)$ in expectation would increase by a positive constant in each round. Consequently, $\lambda$ would show an exponential growth, quickly leading to wastefully large values. This can partially be overcome by imposing an upper barrier for $\lambda$ (we have such a barrier, namely $\lambda \le n$, to ensure that the mutation probability $p = \lambda/n$ is at most $1$), however, this would still lead to the algorithm mostly working with this maximal value of $\lambda$ instead of a value close to the ideal $\lambda^*$.

In general, there is no reason for not trying success rules with other ratios $1/r$ than one-fifth, that is, increasing $\lambda$ by $F^{1/(r-1)}$ instead of $F^{1/4}$ in case of non-success. In general, a larger value of $r$ will slightly decrease the speed of adjustment, but is more likely to overcome the problem described in the previous paragraph. Note that for our problem, when $\lambda = \omega(1)$, the success probability is uniformly bounded from above by $(1+o(1)) e^{-1/e} \approx 0.31$. Consequently, the one-fifth rule avoids the exponential growth of $\lambda$, whereas a one-third rule or a one-half rule (e.g., doubling or halving the parameter as in~\cite{LassigS11}) would not. 

\textbf{\emph{The constant $F$ matters.}} Even when the combination of update rule and success probability avoids an expected exponential growth of $\lambda$, things can still go wrong when the update strength $F$ is too large. Here is an example (where, to ease the presentation, we assume that we have no upper barrier on $\lambda$; with an upper barrier, as above, the problem remains, though possibly to a smaller extent): 
Imagine that we start with some value $\lambda_0$. Above, we saw that the success probability of an iteration is bounded from above by $0.31$. Consequently, the probability of having exactly $m$ consecutive non-successes is at least $0.31 \cdot 0.69^m$. The optimization time of the last iteration alone is $\lambda_0 F^{m/4}$. Consequently, the expected effort of finding one improvement is at least $0.31 \lambda_0 \sum_{m = 0}^\infty (0.69 F^{1/4})^m$. When $0.69 F^{1/4} > 1$, this series does not converge, i.e., the expected effort for one improvement is infinite. In our case, this happens (at least) when $F > 4.41$. 
Note that this was a rough estimate aimed at quickly demonstrating that large $F$-values can be dangerous. Better values can be achieved with more effort. E.g., the probability that among $6m$ iterations, we have at most $m$ successes, is more than $(0.755+o(1))^{m}$; this can be seem from approximating the binomial distribution with a normal distribution. 
Since this event also increases the initial $\lambda$ value by $F^{m/4}$, the corresponding series already diverges for $F \ge 3.08$. Optimizing the ratio of successes and non-successes, we see that the probability of having $\gamma$ successes among $1+5\gamma$ trials is more than $(0.8167+o(1))^m$, showing that any $F \ge 2.25$ leads to an exponential expected optimization time. We do not know to what extent this argument can be improved. For this reason, we would rather suggest to choose a small value of $F$, clearly below $2$, and trade in the possibly faster adjustment to the ideal parameter value for a reduced risk of an expected infinite optimization time. 

\section{A Linear Lower Bound for All Possible Parameter Choices}
\label{sec:lower}

In the previous section we have seen that the self-adjusting \ga is faster on average than with any static population size. We next show that it is asymptotically best possible also among all dynamic parameter choices. That is, regardless of how the parameters are updated in each iteration, the \ga always has an expected runtime on \onemax that is at least linear in $n$. 

\begin{theorem}
\label{thm:lower}
For every (possibly dynamic) choice of the mutation probability $p$, the crossover probability $c$, and the population size $\lambda$, the \ga performs at least linearly many function evaluations on average before it evaluates for the first time the unique global optimum. That is, regardless of the parameter update scheme, the expected runtime of the \ga on \onemax is $\Omega(n)$.
\end{theorem}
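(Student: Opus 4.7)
The approach will be an additive drift argument on the potential $\phi_t := n - \OM_z(x_t)$, the Hamming distance from the current incumbent to the optimum $z$. Since $x_0$ is sampled uniformly from $\{0,1\}^n$, $\phi_0 \sim \Bin(n, 1/2)$, and a Chernoff bound shows that, with probability bounded below by an absolute constant, $\phi_0 \in [n/4, n/2]$; conditioning on this event only costs a constant factor, and under it $\E[\phi_0] = \Theta(n)$ and $\OM_z(x_t) \geq n/2$ for every $t$ (the fitness is non-decreasing by the elitist selection step). Let $\tau$ be the random number of iterations until the optimum is first evaluated and let $T = \sum_{t=1}^{\tau} 2\lambda_t$ be the total number of function evaluations; the goal is to conclude $\E[T] = \Omega(n)$.

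The crucial step will be a per-iteration drift lemma: for an absolute constant $C$ and every iteration $t$ with history-dependent parameters $\lambda_t, p_t, c_t$,
\[
\E[\phi_{t-1} - \phi_t \mid \mathcal{F}_{t-1}] \leq C\, \lambda_t.
\]
Once this is in place, Wald's equation applied at the stopping time $\tau$ gives
\[
\E[\phi_0] = \E\left[\sum_{t=1}^{\tau}(\phi_{t-1} - \phi_t)\right] \leq C \cdot \E\left[\sum_{t=1}^{\tau} \lambda_t\right] = \tfrac{C}{2}\, \E[T],
\]
so $\E[T] \geq (2/C)\, \E[\phi_0] = \Omega(n)$.

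To prove the per-iteration lemma, I condition on $\mathcal{F}_{t-1}$, on the mutation strength $\ell \sim \Bin(n, p_t)$, and on the selected mutation parent $x'$. Writing $g$ for the number of ``good'' flips (wrong bit of $x$ flipped to right in $x'$) and $b$ for the number of ``bad'' flips, with $g + b = \ell$, each crossover offspring $y^{(i)}$ has fitness change $\Delta_i = G_i - B_i$ with $G_i \sim \Bin(g, c_t)$ and $B_i \sim \Bin(b, c_t)$ independent. Since only non-worsening offspring are accepted, the iteration's gain is bounded by $\max(0, \max_i \Delta_i) \leq \sum_{i=1}^{\lambda_t} \max(0, \Delta_i)$, and by linearity $\E[\phi_{t-1} - \phi_t \mid \mathcal{F}_{t-1}] \leq \lambda_t \cdot \E[\max(0, \Delta_1) \mid \mathcal{F}_{t-1}]$. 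It thus suffices to prove $\E[\max(0, \Delta_1) \mid \mathcal{F}_{t-1}] = O(1)$ uniformly in the parameters, which I would approach via the decomposition $\max(0, \Delta_1) \leq \max(0, \E[\Delta_1\mid \ell, x']) + |\Delta_1 - \E[\Delta_1\mid \ell, x']|$.

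The hard part will be the uniform $O(1)$ bound on $\E[\max(0, \Delta_1)]$, particularly in the regime where $c_t n p_t$ is large. The fluctuation term is of order $\sqrt{c_t \ell}$ by Hoeffding-type concentration and is $O(1)$ only when $c_t n p_t = O(1)$; in the complementary regime one exploits the conditioning $\OM_z(x_t) \geq n/2$, under which the mean $\E[\Delta_1 \mid \ell, x'] = c_t(g-b)$ is strongly negative in expectation over the hypergeometric $g$ (since $\E[g \mid \ell] = \ell(n - \OM_z(x_t))/n \leq \ell/2$). A Mill's-ratio-type estimate on the upper tail of the binomial difference $\Delta_1$ then forces the positive tail to be exponentially small, keeping $\E[\max(0, \Delta_1)]$ bounded by an absolute constant. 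Making this estimate go through uniformly over the ranges of $\ell, g, b, c_t$ is the technically most delicate step, though it relies only on elementary concentration tools.
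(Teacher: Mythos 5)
Your overall plan (additive drift on $\phi_t = n - \OM_z(x_t)$ plus Wald's identity) is sound in spirit, and it is in fact the route the paper itself sketches for the fitness-dependent case -- except that the paper only carries it out for the standard parameterization $p=\lambda/n$, $c=1/\lambda$, where the per-iteration gain is known to be $O(\log\lambda/\log\log\lambda)$ from the companion analysis, and explicitly declares the general-$(p,c)$ case ``tedious but straightforward'' and omits it. (For non-fitness-dependent parameter choices the paper instead uses a completely different, information-theoretic argument: the \ga is comparison-based, and every comparison-based algorithm needs $\Omega(n)$ comparisons on \onemax by Yao's minimax principle.) You are thus attempting precisely the part the paper skips, and your central lemma is where the difficulty bites: the claimed bound $\E[\phi_{t-1}-\phi_t\mid\F_{t-1}]\le C\lambda_t$ is \emph{false} in the regime you start in. Take $d:=n-\OM_z(x)=n/2$ (the initial distance is $n/2\pm O(\sqrt n)$ with constant probability, so conditioning on $\phi_0\in[n/4,n/2]$ does not exclude this), and let the adversarial schedule pick $\lambda=1$, $p=1/2$, $c=1$. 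Then $y^{(1)}=x'=x^{(1)}$, the gain is $\max(0,g-b)=\max(0,2g-\ell)$ with $g$ hypergeometric, $\E[2g-\ell]=\ell(2d/n-1)=0$ and standard deviation $\Theta(\sqrt\ell)=\Theta(\sqrt n)$, so the expected gain of a single iteration of cost $2$ is $\Theta(\sqrt n)$, not $O(1)$. The flaw is visible in your own sketch: the conditioning $\OM_z(x_t)\ge n/2$ only yields $\E[g\mid\ell]\le\ell/2$, i.e.\ a \emph{non-positive} mean for $\Delta_1$, not a ``strongly negative'' one; for a mean-zero variable the Mills-ratio bound gives $\E[\max(0,\Delta_1)]=\Theta(\sigma)$, which can be as large as $\Theta(\sqrt n)$. (A secondary issue: the selected $x'$ maximizes $g_i-b_i$ over $\lambda$ mutants, so its $g$ is not hypergeometric but stochastically larger; this also needs to be accounted for before averaging ``over the hypergeometric $g$''.)

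The theorem survives because these windfall gains are self-limiting -- once $d$ drops to $n/2-k$ the mean of $2g-\ell$ is $-2k\ell/n$ and the positive tail decays like $\exp(-\Theta(k^2\ell/n^2))$, so the total fitness that can be harvested this way before the drift bound kicks in is $o(n)$ -- but your proof does not contain this argument, and without it Wald only yields $\E[T]=\Omega(\E[\phi_0]/\sqrt n)=\Omega(\sqrt n)$. To repair the proof you would need either to (i) run the drift argument on a truncated potential such as $\min\{\phi_t,n/4\}$, showing that the process cannot overshoot far below $n/4$ in the single improving step that crosses it, or (ii) prove a regime-dependent drift bound of the form $C\lambda_t + O(\sqrt\ell\,e^{-\Theta(k^2\ell/n^2)})$ and show the second terms sum to $o(n)$ along any trajectory. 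Alternatively, for the self-adjusting (non-fitness-dependent) algorithm that is the actual subject of the paper, the comparison-based black-box argument gives the bound with none of this case analysis.
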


For parameter choices that do not depend on absolute fitness values, Theorem~\ref{thm:lower} follows from an elegant technique in black-box complexity. Since we believe this intuitive argument to be of general interest to the Self-* research community, we present it in the following section. It can be seen as a nice, yet powerful tool for analyzing the limitations of evolutionary and other black-box optimization algorithms.

For fitness-dependent parameter choices, Theorem~\ref{thm:lower} also holds, but needs a different argument as we shall comment in Section~\ref{sec:lowerfitness}.

\subsection{Lower Bound for Self-Adjusting Parameter Choices}
\label{sec:lowerblackbox}


We start the exposition of the lower bound by introducing the concept of \emph{comparison-based algorithms}.

\begin{definition}
\label{def:comparison}
A comparison-based black-box algorithm does not make use of absolute fitness values. Instead, it bases all decisions solely on the comparison of search points. 
\end{definition} 

To see that the self-adjusting \ga is a comparison-based algorithm, we reformulate the algorithm slightly, see Algorithm~\ref{alg:ga2}. This alternative presentation shows that indeed all decisions of the \ga are entirely based on comparisons between at most two search points. A lower bound that holds for \emph{all} comparison-based algorithms thus immediately implies a lower bound for the \ga. We therefore expand our view and regard the whole class of comparison-based algorithms. Theorem~\ref{thm:lower} for non-fitness-dependent parameter choices follows from the following statement, which is folklore knowledge in black-box complexity and has been formally stated (in much more general form) in~\cite[Corollary 2]{TeytaudG06}.

\begin{algorithm2e}%
	\textbf{Initialization:} 
	Sample $x \in \{0,1\}^n$ uniformly at random\;
 \textbf{Optimization:}
\For{$t=1,2,3,\ldots$}{
Depending on $x$ (but not $f(x)$) choose $\lambda \in [n]$, $p \in [0,1]$, and $c\in [0,1]$\;
\underline{\textbf{Mutation phase:}}\\
\Indp
Sample $\ell$ from $\Bin(n,p)$ and $x':=x^{(1)} \assign \mut_{\ell}(x)$\;
\For{$i=2, \ldots, \lambda$}{
Sample $x^{(i)} \assign \mut_{\ell}(x)$\;
\lIf{$f(x^{(i)}) \geq f(x')$}{$x' \assign x^{(i)}$\;}
}
\Indm
\underline{\textbf{Crossover phase:}}\\
\Indp
Initialize $y:=y^{(1)} \assign \cross_{c}(x,x')$\;
\For{$i=2, \ldots, \lambda$}{
Sample $y^{(i)} \assign \cross_{c}(x,x')$\;
\lIf{\normalfont{(}$y^{(i)} \neq x$ \normalfont{and} $f(y^{(i)}) \geq f(y)$\normalfont{)}}{$y \assign y^{(i)}$\;}}
\Indm
\underline{\textbf{Selection and update step:}}\\
\Indp
\lIf{$f(y) \geq f(x)$}{$x \assign y$\;}
\Indm
}
\caption{A reformulation of the \ga with possibly adaptive (but not fitness-dependent) parameters $p$, $c$, and $\lambda$.}
\label{alg:ga2}
\end{algorithm2e}

\begin{theorem}
\label{thm:lowerboundcompbased}
Every comparison-based algorithm needs at least $\Omega(n)$ comparisons on average to optimize a generalized \onemax function.
\end{theorem}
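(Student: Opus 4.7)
The plan is to prove the theorem by the classical information-theoretic lower bound for comparison-based algorithms, combined with Yao's minimax principle to handle randomization. First, by Yao's principle it suffices to show that any \emph{deterministic} comparison-based algorithm has expected runtime $\Omega(n)$ when the target $z \in \{0,1\}^n$ is drawn uniformly at random (since the worst-case expected runtime of a randomized algorithm is at least the best expected runtime of a deterministic algorithm against this input distribution). So I would fix a deterministic comparison-based algorithm $\A$ and analyze its behavior on a uniformly random instance $\OM_z$.

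The core observation is that such an $\A$ can be modeled as a ternary decision tree: internal nodes are comparisons between two already-sampled search points (returning one of $<$, $=$, $>$), and by Definition~\ref{def:comparison} the sequence of sampled search points, the next comparison to be issued, and the eventually output optimum are all determined by the path of comparison outcomes traced so far. Consequently, after at most $T$ comparisons the execution has reached one of at most $3^T$ possible leaves, and the output on input $\OM_z$ depends only on which leaf is reached. Since $\A$ must output the unique optimum $z$, runs of length at most $T$ can correctly solve at most $3^T$ of the $2^n$ possible targets.

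To convert this into an average-case bound I would apply Markov's inequality: if $\E_z[T_\A(z)] \le T_0$, then for at least half of the $2^n$ targets the algorithm terminates within $2T_0$ comparisons, and hence correctly outputs $z$ on that many targets. The counting bound above then forces $3^{2T_0} \ge 2^{n-1}$, i.e., $T_0 \ge (n-1)(\log 2)/(2\log 3) = \Omega(n)$. Chaining back through Yao's principle yields the claimed $\Omega(n)$ bound for the original randomized setting, which in particular applies to the \ga in its reformulation as Algorithm~\ref{alg:ga2}.

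The only delicate point, and hence the main obstacle, is justifying that the entire transition structure of the algorithm really depends only on the sequence of comparison outcomes and not on absolute fitness values. For the \ga in the form of Algorithm~\ref{alg:ga2} this is essentially syntactic, since every branching point is of the form $f(a) \ge f(b)$ for two sampled points $a,b$. For a fully general comparison-based algorithm it amounts to observing that any internal randomness can be pushed into the distribution over deterministic trees to which Yao's principle is applied, while Definition~\ref{def:comparison} already excludes absolute fitness values from the state. Once this formalization is in place, the rest is a short counting argument and a one-line application of Markov's inequality.
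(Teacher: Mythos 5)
Your proposal is correct and follows essentially the same route as the paper, which likewise reduces to deterministic algorithms via Yao's minimax principle and then uses the information-theoretic counting argument (at most one comparison outcome, hence $O(1)$ bits, learned per query versus the $n$ bits needed to identify $z$ among $2^n$ targets); the paper merely leaves this formalization to the cited reference \cite{TeytaudG06} while you spell out the decision-tree and Markov steps explicitly. The only cosmetic difference is that you phrase success as ``outputting'' $z$ at a leaf rather than as first evaluating $z$ during the run, but since the queried points along any root-to-leaf path are determined by the comparison outcomes, this changes the count of solvable targets by at most a polynomial factor and leaves the $\Omega(n)$ bound intact.
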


The intuitive argument for Theorem~\ref{thm:lowerboundcompbased} is pretty simple. In order to optimize a \onemax function $\OM_z$ we need to identify $z$. That is, we need to \emph{learn} the $n$ bits of $z$. Roughly speaking, with each query we learn at most one bit of information about $z$, namely in the mutation phase we learn whether $\OM_z(x^{(i)}) \geq \OM_z(x')$ or not, and in the crossover phase we learn the bit whether or not $\OM_z(y^{(i)}) \geq \OM_z(y)$. Finally, we learn the one bit of information whether or not $\OM_z(y) \geq \OM_z(x)$. Thus one iteration with $\lambda$ offspring in the mutation and the crossover phase each gives us a total of $2(\lambda-1)+1$ bits of information. Thus, amortized over the $2\lambda$ offspring that were created, this is a bit less than one bit of information per query. Since we need to learn all $n$ bits of $z$, this shows (intuitively) that we need to sample and compare at least $n$ search points in total. This implies the lower bound. 

It is not too difficult to make this intuitive argument formal. To this end, one employs Yao's Minimax Principle~\cite{Yao77}, a powerful tool in black-box complexity. The interested reader can find a quite accessible exposition of Yao's Principle along with some easy to follow examples from evolutionary computation in~\cite{DrosteJW06}.

\subsection{Lower Bound for Fitness-Dependent Parameter Choices}
\label{sec:lowerfitness}

The proof given in Section~\ref{sec:lowerblackbox} does not work for parameter choices that possibly depend on the absolute fitness of intermediate search points. Intuitively, the problem here is that $\OM_z(x) \in [0..n]$ and knowing (or, rather, using knowledge about) the absolute fitness value of $x$ provides $\log_2(n+1)$ bits of information. This would only yield a lower bound of order $n/\log n$. Still the statement, i.e., Theorem~\ref{thm:lower}, holds also for fitness-dependent parameter choices, as we shall briefly comment in this section. Note that this bound also implies the optimality of the fitness-dependent choice suggested in~\cite{DoerrDE15}.

In the analysis of~\cite{DoerrD15exact} it is shown (for the recommended parameter choices $p=\lambda/n$ and $c=1/\lambda$) that the expected fitness gain in one iteration of the \ga with population size $\lambda$ is of order at most $\log \lambda/\log \log \lambda$ (see the proof of the upper and lower bounds of Theorem~2 in~\cite{DoerrD15exact}). That is, for ``investing'' $2 \lambda$ function evaluations we obtain a fitness gain of order at most $\log \lambda/\log \log \lambda$. 
This shows that the average fitness gain per function evaluation cannot be more than constant, thus implying the linear lower bound. To make this formal, one can use the additive drift theorem~\cite{HeY01}.

To show Theorem~\ref{thm:lower} in full generality, one has to redo the analysis from~\cite{DoerrD15exact} for general $p$ and $c$, a tedious but straightforward work that we omit here. 

\section{Conclusions}
\label{sec:conclusions}
We have analyzed the \ga with self-adjusting population sizes. We have shown that it optimizes any generalized \onemax function in linear time. This is best possible for any (static or dynamic) parameter choice and is better by a $\Theta(\sqrt{\log(n)\log\log\log(n)/\log\log(n)})$ factor than any \ga with static population size. Our result thus shows for the first time that self-adjusting parameter choices can be provably beneficial in discrete optimization problems. 

We hope that our work inspires more work on the running time of evolutionary algorithms with self-adjusting parameter choices. We have provided some general insights that should be regarded when implementing a one-fifth success rule in discrete search spaces. 

While our work focuses on an adjustment of the population size, we are confident that also for other parameters of evolutionary algorithms, e.g., the mutation or crossover rates, self-adjusting choices can be analyzed theoretically. 

\section*{Acknowledgments}
The authors would like to thank Anne Auger and Nikolaus Hansen for valuable discussions on various aspect of this project.

Figure~\ref{fig:plot} is taken from~\cite{DoerrDE15}. 
}


\end{document}